\newtheorem{mydef}{Definition}
\newtheorem{thm}{Theorem}
\newtheorem{lemma}[thm]{Lemma}
\newcommand\flyhash{FlyHash\xspace}
\newcommand\densefly{DenseFly\xspace}
\newcommand\simhash{SimHash\xspace}
\newcommand\wtahash{WTAHash\xspace}
\newcommand\flyhashmp{FlyHash-MP\xspace}
\newcommand\mnist{{MNIST}\xspace}
\newcommand\sift{{SIFT-1M}\xspace}
\newcommand\gist{{GIST-1M}\xspace}
\newcommand\random{{Random}\xspace}
\newcommand\glove{{GLoVE}\xspace}
\newcommand\labelme{{LabelMe}\xspace}
\newcommand{\lagr}{\mathcal{L}}
\newcommand{\bd}{\mathcal{B}}
\newcommand{\normal}{\mathcal{N}}
\newcommand{\expt}{\mathbb{E}}
\begin{document}
%
\title{Improving Similarity Search with High-dimensional Locality-sensitive Hashing}
%
%
%
%

\author{Jaiyam Sharma
        and~Saket~Navlakha
\IEEEcompsocitemizethanks{\IEEEcompsocthanksitem J.\@ Sharma is with the Integrative Biology Laboratory, Salk Institute for Biological Studies, La Jolla, CA 92037.\protect\\
E-mail: jaiyamsharma@gmail.com
\IEEEcompsocthanksitem S.\@ Navlakha is with the Integrative Biology Laboratory, Salk Institute for Biological Studies, La Jolla, CA 92037.
\protect\\
E-mail: navlakha@salk.edu}

\thanks{}}

%
%

\markboth{J. Sharma, S. Navlakha, December~2018}%
{Shell \MakeLowercase{\textit{et al.}}: Bare Demo of IEEEtran.cls for Computer Society Journals}
%



\IEEEtitleabstractindextext{%
\begin{abstract}
We propose a new class of data-independent locality-sensitive hashing (LSH) algorithms based on the fruit fly olfactory circuit. The fundamental difference of this approach is that, instead of assigning hashes as dense points in a low dimensional space, hashes are assigned in a high dimensional space, which enhances their separability. We show theoretically and empirically that this new family of hash functions is  locality-sensitive and preserves rank similarity for inputs in any $\ell_p$ space. We then analyze different variations on this strategy and show empirically that they outperform existing LSH methods for nearest-neighbors search on six benchmark datasets. Finally, we propose a multi-probe version of our algorithm that achieves higher performance for the same query time, or conversely, that maintains performance of prior approaches while taking significantly less indexing time and memory. Overall, our approach leverages the advantages of separability provided by high-dimensional spaces, while still remaining computationally efficient.
\end{abstract}

\begin{IEEEkeywords}
locality-sensitive hashing, similarity search, neuroscience, fruit fly olfaction, algorithms, data-independent hashing
\end{IEEEkeywords}}

\maketitle

\IEEEdisplaynontitleabstractindextext

%
\IEEEpeerreviewmaketitle

\IEEEraisesectionheading{\section{Introduction}\label{sec:introduction}}

%
%
%
%


\IEEEPARstart{S}{imilarity} search is an essential problem in machine learning and information retrieval~\cite{Andoni2008,Wang2014}. The main challenge in this problem is to efficiently search a database to find the most similar items to a query item, under some measure of similarity. While efficient methods exist when searching within small databases (linear search) or when items are low dimensional (tree-based search~\cite{Samet2005,Liu2005}), exact similarity search has remained challenging for large databases with high-dimensional items~\cite{Gionis1999}. 

This has motivated \emph{approximate} similarity search algorithms, under the premise that for many applications, finding reasonably similar items is ``good enough'' as long as they can be found quickly. One popular  approximate similarity search algorithm is based on \emph{locality-sensitive hashing} (LSH~\cite{Indyk1998}). The main idea of many LSH approaches is to compute a low-dimensional hash for each input item, such that similar items in input space lie nearby in hash space for efficient retrieval.


In this work, we take inspiration from the fruit fly olfactory circuit~\cite{Dasgupta2017} and consider a conceptually different hashing strategy, where instead of hashing items to lie densely in a low-dimensional space, items are hashed to lie in a high-dimensional space. We make the following contributions: 
\begin{enumerate}
\item We present and analyze the locality-sensitive properties of a binary high-dimensional hashing scheme, called \densefly. We also show that \densefly outperforms previous algorithms on six benchmark datasets for nearest-neighbors retrieval;
\item We prove that \flyhash (from Dasgupta et al.~\cite{Dasgupta2017}) preserves rank similarity under any $\ell_p$ norm, and empirically outperforms prior LSH schemes designed to preserve rank similarity; 
\item We develop and analyze multi-probe versions of \densefly and \flyhash that improve search performance while using similar space and time complexity as prior approaches. These multi-probe approaches offer a solution to an open problem stated by Dasgupta et al.~\cite{Dasgupta2017} of learning efficient binning strategies for high-dimensional hashes, which is critical for making high-dimensional hashing usable for practical applications.
\end{enumerate}

\section{Related work}


Dasgupta et al.~\cite{Dasgupta2017} recently argued that the fruit fly olfactory circuit uses a variant of locality-sensitive hashing to associate similar behaviors with similar odors it experiences. When presented with an odor (a ``query''), fruit flies must recall similar odors previously experienced in order to determine the most appropriate behavioral response. Such a similarity search helps to overcome noise and to generalize behaviors across similar stimuli.

The fly olfactory circuit uses a three-layer architecture to assign an input odor a hash, which corresponds to a set of neurons that fire whenever the input odor is experienced. The first layer consists of 50 odorant receptor neuron (ORN) types, each of which fires at a different rate for a given odor. An odor is thus initially encoded as a point in 50-dimensional space, $\mathbb{R}_{+}^{50}$. The distribution of firing rates for these 50 ORNs approximates an exponential distribution, whose mean depends on the odor's concentration~\cite{Hallem2006,Stevens2016}. The second layer consists of 50 projection neurons (PNs) that receive feed-forward input from the ORNs and recurrent inhibition from lateral inhibitory neurons. As a result, the distribution of firing rates for the 50 PNs is mean-centered, such that the mean firing rate of PNs is roughly the same for all odors and odor concentrations~\cite{Root2008,Asahina2009,Olsen2010,Stevens2015}. The third layer \emph{expands} the dimensionality: the 50 PNs connect to 2000 Kenyon cells (KCs) via a sparse, binary random projection matrix~\cite{Caron2013}. Each KC samples from roughly 6 random PNs, and sums up their firing rates. Each KC then provides feed-forward excitation to an inhibitory neuron call APL, which provides proportional feed-back inhibition to each KC. As a result of this winner-take-all computation, only the top $\sim$5\% of the highest firing KCs remain firing for the odor; the remaining 95\% are silenced. This 5\% corresponds to the hash of the odor. Thus, an odor is assigned a sparse point in a higher dimensionality space than it was originally encoded (from 50-dimensional to 2000-dimensional).

The fly evolved a unique combination of computational ingredients that have been explored piece-meal in other LSH hashing schemes. For example, \simhash~\cite{Charikar2002,Datar2004} and Super-bit LSH~\cite{Ji2012} use dense Gaussian random projections, and FastLSH~\cite{Dasgupta2011} uses sparse Gaussian projections, both in low dimensions to compute hashes, whereas the fly uses sparse, binary random projections in high dimensions. Concomitant LSH~\cite{Eshghi2008} uses high dimensionality, but admit to exponentially increasing computational costs as the dimensionality increases, whereas the complexity of our approach scales linearly. WTAHash~\cite{Yagnik2011} uses a local winner-take-all mechanism (see below), whereas the fly uses a global winner-take-all mechanism. Sparse projections have also been used within other hashing schemes~\cite{KN14,AllenZhu2014,Shi2009,Li2006,Andoni2015}, but with some differences. For example, Li et al.~\cite{Li2006} and Achlioptas~\cite{Achlioptas2003} use signed binary random projections instead of 0-1 random projections used here; Li et al.~\cite{Li2006} also do not propose to expand the hash dimensionality.



\section{Methods}

We consider two types of binary hashing schemes consisting of hash functions, $h_1$ and $h_2$. The LSH function $h_1$ provides a distance-preserving embedding of items in $d$-dimensional input space to $mk$-dimensional binary hash space, where the values of $m$ and $k$ are algorithm specific and selected to make the space or time complexity of all algorithms comparable (Section~\ref{sec:eval}). The function $h_2$ places each input item into a discrete bin for lookup. Formally: 
%
\begin{adjustwidth}{3mm}{3mm}
\begin{mydef}
A hash function $h_1: \mathbb{R}^d \rightarrow \{0,1\}^{mk}$ is called locality-sensitive if for any two input items $p,q \in \mathbb{R}^d$, $\mathrm{Pr}[h_1(p) = h_1(q)] = \mathrm{sim}(p,q)$, where $\mathrm{sim}(p,q) \in [0,1]$ is a similarity measure between $p$ and $q$.
\end{mydef}
\end{adjustwidth}
%
\begin{adjustwidth}{3mm}{3mm}
\begin{mydef}
A hash function $h_2: \mathbb{R}^{d} \rightarrow [0, \dots, b]$ places each input item into a discrete bin.\\
\end{mydef}
\end{adjustwidth}

\noindent Two disadvantages of using $h_1$ --- be it low or high dimensional --- for lookup are that some bins may be empty, and that true nearest-neighbors may lie in a nearby bin. This has motivated \emph{multi-probe LSH}~\cite{Lv2007} where, instead of probing only the bin the query falls in, nearby bins are searched, as well. 

Below we describe three existing methods for designing $h_1$ (\simhash, \wtahash, \flyhash) plus our  method (\densefly). We then describe our methods for providing low-dimensional binning for $h_2$ to \flyhash and \densefly. All algorithms described below are data-independent, meaning that the hash for an input is constructed without using any other input items. Overall, we propose a hybrid fly hashing scheme that takes advantage of high-dimensionality to provide better ranking of candidates, and low-dimensionality to quickly find candidate neighbors to rank. 


\subsection{SimHash}

Charikar~\cite{Charikar2002} proposed the following hashing scheme for generating a binary hash code for an input vector, $x$. First, $mk$ (i.e., the hashing dimension) random projection vectors, $r_1, r_2, \dots, r_{mk}$, are generated, each of dimension $d$. Each element in each random projection vector is drawn uniformly from a Gaussian distribution, $\mathcal{N}(0,1)$. Then, the $i$\textsuperscript{th} value of the binary hash is computed as:
\begin{equation}
h_1(x)_i = 
  \begin{cases}
     1 & \quad\text{if $r_i \cdot x \geq 0$}\\
     0 & \quad\text{if $r_i \cdot x < 0$.}
  \end{cases}
\end{equation}
This scheme preserves distances under the angular and Euclidean distance measures~\cite{Datar2004}.

\subsection{WTAHash (Winner-take-all hash)}

Yagnik et al.~\cite{Yagnik2011} proposed the following binary hashing scheme. First, $m$ permutations, $\theta_1, \theta_2, \dots, \theta_m$ of the input vector are computed. For each permutation $i$, we consider the first $k$ components, and find the index of the component with the maximum value. $C_i$ is then a zero vector of length $k$ with a single 1 at the index of the component with the maximum value. The concatenation of the $m$ vectors, $h_1(x) = [C_1, C_2, \dots, C_{m}]$ corresponds to the hash of $x$. This hash code is sparse --- there is exactly one 1 in each successive block of length $k$ --- and by setting $mk > d$, hashes can be generated that are of dimension greater than the input dimension. We call $k$ the WTA factor.

\wtahash preserves distances under the rank correlation measure~\cite{Yagnik2011}. It also generalizes MinHash~\cite{Broder1997,Shrivastava2014}, and was shown to outperform several data-dependent LSH algorithms, including PCAHash~\cite{Wang2006PCA2,Wang2006PCA1}, spectral hash, and, by transitivity, restricted Boltzmann machines~\cite{Weiss2009}.

\subsection{\flyhash and \densefly}
\label{sec:flyhash}

The two fly hashing schemes (Algorithm~\ref{alg:flyhash}, Figure~\ref{fig:overview}) first project the input vector into an $mk$-dimensional hash space using a sparse, binary random matrix, proven to preserve locality~\cite{Dasgupta2017}. This random projection has a sampling rate of $\alpha$, meaning that in each random projection, only $\lfloor \alpha d\rfloor$ input indices are considered (summed). In the fly circuit, $\alpha\sim 0.1$ since each Kenyon cell samples from roughly 10\% (6/50) of the projection neurons. 

\begin{figure}[h]
\begin{center}
\centerline{\includegraphics[width=\columnwidth]{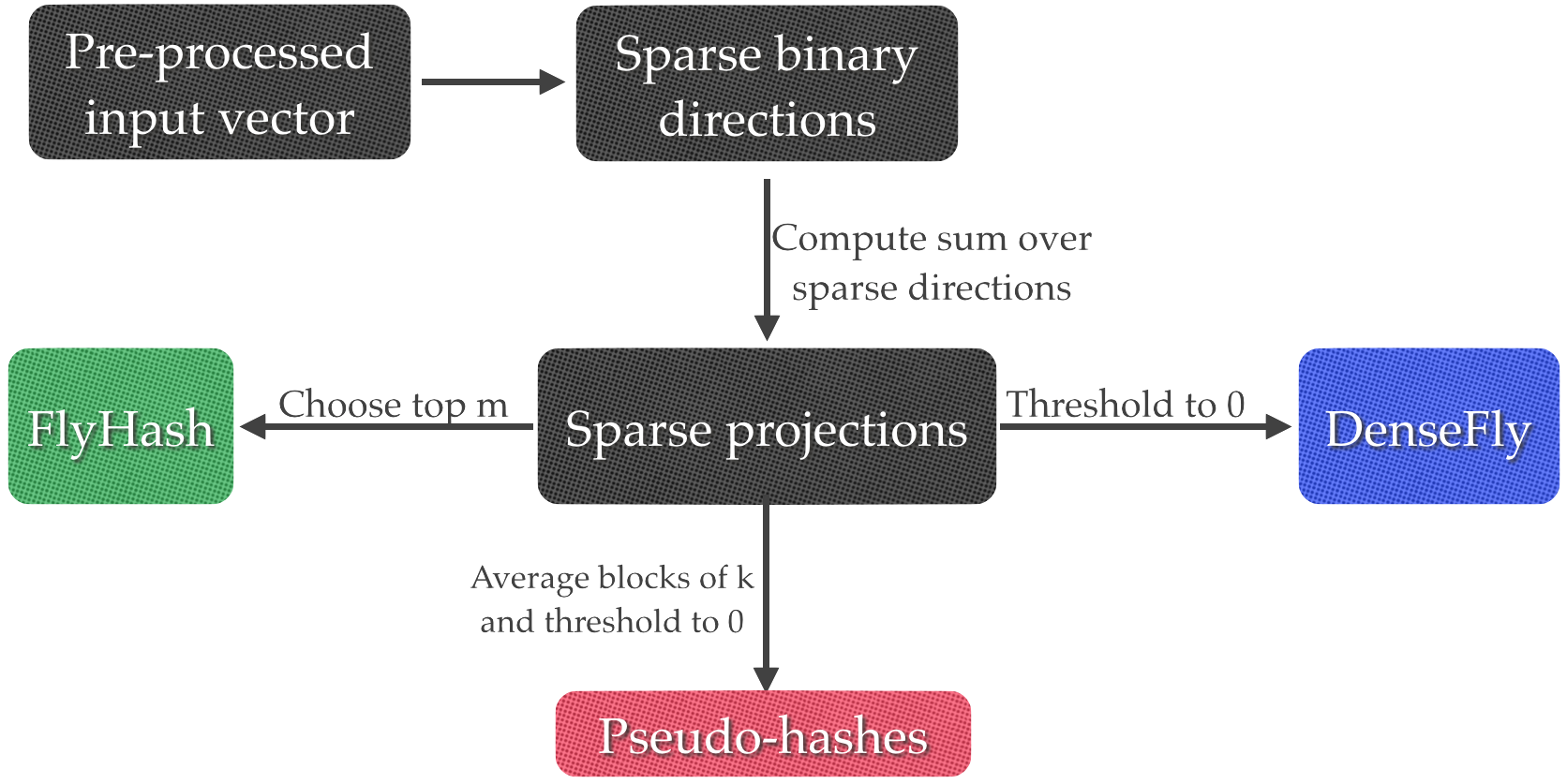}}
\caption{Overview of the fly hashing algorithms.}
\label{fig:overview}
\end{center}
\end{figure}

The first scheme, \flyhash~\cite{Dasgupta2017}, sparsifies and binarizes this representation by setting the indices of the top $m$ elements to 1, and the remaining indices to 0. In the fly circuit, $k=20$, since the top firing 5\% of Kenyon cells are retained, and the rest are silenced by the APL inhibitory neuron. Thus, a \flyhash hash is an $mk$-dimensional vector with exactly $m$ ones, as in \wtahash. However, in contrast to \wtahash, where the WTA is applied locally onto each block of length $k$, for \flyhash, the sparsification happens globally considering all $mk$ indices together. We later prove (Lemma~\ref{lm:po}) that this difference allows more pairwise orders to be encoded within the same hashing dimension.

For \flyhash, the number of unique Hamming distances between two hashes is limited by the hash length $m$. Greater separability can be achieved if the number of $1$s in the high-dimensional hash is allowed to vary. The second scheme, \densefly, sparsifies and binarizes the representation by setting all indices with values $\geq 0$ to 1, and the remaining to 0 (akin to \simhash though in high dimensions). We will show that this method provides even better separability than \flyhash in high dimensions.

\subsection{Multi-probe hashing to find candidate nearest-neighbors}

In practice, the most similar item to a query may have a similar, but not exactly the same, $mk$-dimensional hash as the query. In such a case, it is important to also identify candidate items with a similar hash as the query. Dasgupta et al.~\cite{Dasgupta2017} did not propose a multi-probe binning strategy, without which their \flyhash algorithm is unusable in practice.

\textbf{\simhash.} For low-dimensional hashes, \simhash efficiently probes nearby hash bins using a technique called \emph{multi-probe}~\cite{Lv2007}. All items with the same $mk$-dimensional hash are placed into the same bin; then, given an input $x$, the bin of $h_1(x)$ is probed, as well as all bins within Hamming distance $r$ from this bin. This approach leads to large reductions in search space during retrieval as only bins which differ from the query point by $r$ bits are probed. Notably, even though multi-probe avoids a linear search over all points in the dataset, a linear search over the bins themselves is unavoidable.

\textbf{\flyhash and \densefly.} For high-dimensional hashes, multi-probe is even more essential because even if two input vectors are similar, it is unlikely that their high dimensional hashes will be exactly the same. For example, on the \sift dataset with a WTA factor $k=20$ and $m=16$, \flyhash produces about 860,000 unique hashes (about 86\% the size of the dataset). In contrast, \simhash with $mk=16$ produces about 40,000 unique hashes (about 4\% the size of the dataset). Multi-probing directly in the high-dimensional space using the \simhash scheme, however, is unlikely to reduce the search space without spending significant time probing many nearby bins.

One solution to this problem is to use low-dimensional hashes to reduce the search space and quickly find candidate neighbors, and then to use high-dimensional hashes to rank these neighbors according to their similarity to the query. We introduce a simple algorithm for computing such low-dimensional hashes, called \emph{pseudo-hashes} (Algorithm~\ref{alg:flyhash}). To create an $m$-dimensional pseudo-hash of an $mk$-dimensional hash, we consider each successive block $j$ of length $k$; if the sum (or equivalently, the average) of the activations of this block is $> 0$, we set the $j$\textsuperscript{th} bit of the pseudo-hash to 1, and 0 otherwise. Binning, then, can be performed using the same procedure as \simhash. 

Given a query, we perform multi-probe on its low-dimensional pseudo-hash ($h_2$) to generate candidate nearest neighbors. Candidates are then ranked based on their Hamming distance to the query in high-dimensional hash space ($h_1$). Thus, our approach combines the advantages of low-dimensional probing and high-dimensional ranking of candidate nearest-neighbors. 

\begin{algorithm}[htb]
   \caption{\flyhash and \densefly}
   \label{alg:flyhash}
\begin{algorithmic}
   \STATE {\bfseries Input:} vector $x \in \mathbb{R}^d$, hash length $m$, WTA factor $k$, sampling rate $\alpha$ for the random projection.
   \STATE 
   \STATE \# Generate $mk$ sparse, binary random projections by 
   \STATE \# summing from $\lfloor \alpha d \rfloor$ random indices each.
   \STATE $S = \{S_i\ |\ S_i = \mathrm{rand}(\lfloor \alpha d \rfloor, d)\}$, where $|S| = mk$
   \STATE 
   \STATE \# Compute high-dimensional hash, $h_1$.
   \FOR{$j=1$ {\bfseries to} $mk$}
   \STATE $a(x)_j = \sum_{i \in S_j} x_i$ \quad \# Compute activations
   \ENDFOR
   \STATE
   \IF{\flyhash}
    \STATE $h_1(x) = \mathrm{WTA}(a(x)) \in \{0,1\}^{mk}$ \ \# Winner-take-all
   \ELSIF{\densefly}
    \STATE $h_1(x) = \mathrm{sgn}(a(x)) \in \{0,1\}^{mk}$ \quad \ \# Threshold at 0
   \ENDIF
   \STATE
   \STATE \# Compute low-dimensional pseudo-hash (bin), $h_2$.
   \FOR{$j=1$ {\bfseries to} $m$}
   \STATE {$p(x)_j = \mathrm{sgn}(\sum_{u=k(j-1)+1}^{u=kj} a(x)_u/k)$}
   \ENDFOR
   \STATE $h_2(x) = g(p(x)) \in [0,\dots,b]$ \quad \# Place in bin
   \STATE
\end{algorithmic}
\underline{Note}: The function $\mathrm{rand}(a,b)$ returns a set of $a$ random integers in $[0,b]$. The function $g(\cdot)$ is a conventional hash function used to place a pseudo-hash into a discrete bin.
\end{algorithm}

\textbf{\wtahash.} To our knowledge, there is no method for doing multi-probe with \wtahash. Pseudo-hashing cannot be applied for \wtahash because there is a 1 in every block of length $k$, hence all psuedo-hashes will be a 1-vector of length $m$.

\subsection{Strategy for comparing algorithms}
\label{sec:eval}

We adopt a strategy for fairly comparing two algorithms by equating either their computational cost or their hash dimensionality, as described below.

\textbf{Selecting hyperparameters.} We consider hash lengths $m \in [16,128]$. We compare all algorithms using $k=4$, which was reported to be optimal by Yagnik et al.~\cite{Yagnik2011} for \wtahash, and $k=20$, which is used by the fly circuit (i.e., only the top 5\% of Kenyon cells fire for an odor).

\textbf{Comparing \simhash versus \flyhash.} \simhash random projections are more expensive to compute than \flyhash random projections; this additional expense allows us to compute more random projections (i.e., higher dimensionality) while not increasing the computational cost of generating a hash. Specifically, for an input vector $x$ of dimension $d$, \simhash computes the dot product of $x$ with a dense Gaussian random matrix. Computing the value of each hash dimension requires $2d$ operations: $d$ multiplications plus $d$ additions. \flyhash (effectively) computes the dot product of $x$ with a sparse binary random matrix, with sampling rate $\alpha$. Each dimension requires $\lfloor \alpha d \rfloor$ addition operations only (no multiplications are needed). Using $\alpha=0.1$, as per the fly circuit, to equate the computational cost of both algorithms, the Fly is afforded $k=20$ additional hashing dimensions. Thus, for \simhash, $mk=m$ (i.e., $k=1$) and for \flyhash, $mk = 20m$. The number of ones in the hash for each algorithm may be different. In experiments with $k=4$, we keep $\alpha=0.1$, meaning that both fly-based algorithms have 1/5\textsuperscript{th} the computational complexity as \simhash. 

\textbf{Comparing \wtahash versus \flyhash.} Since \wtahash does not use random projections, it is difficult to equate the computational cost of generating hashes. Instead, to compare \wtahash and \flyhash, we set the hash dimensionality and the number of 1s in each hash to be equal. Specifically, for \wtahash, we compute $m$ permutations of the input, and consider the first $k$ components of each permutation. This produces a hash of dimension $mk$ with exactly $m$ ones. For \flyhash, we compute $mk$ random projections, and set the indices of the top $m$ dimensions to 1. 

\textbf{Comparing \flyhash versus \densefly.} \densefly computes sparse binary random projections akin to \flyhash, but unlikely \flyhash, it does not apply a WTA mechanism but rather uses the sign of the activations to assign a value to the bit, like \simhash. To fairly compare \flyhash and \densefly, we set the hashing dimension ($mk$) to be the same to equate the computational complexity of generating hashes, though the number of ones may differ.


\textbf{Comparing multi-probe hashing.} \simhash uses low-dimensional hashes to both build the hash index and to rank candidates (based on Hamming distances to the query hash) during retrieval. \densefly uses pseudo-hashes of the same low dimensionality as \simhash to create the index; however, unlike \simhash, \densefly uses the high-dimensional hashes to rank candidates. Thus, once the bins and indices are computed, the pseudo-hashes do not need to be stored.  A pseudo-hash for a query is only used to determine which bin to look in to find candidate neighbors.

\subsection{Evaluation datasets and metrics}
\label{sec:metrics}

\textbf{Datasets}. We evaluate each algorithm on six datasets (Table~\ref{tbl:datasets}). There are three datasets with a random subset of 10,000 inputs each (\glove, \labelme, \mnist), and two datasets with 1 million inputs each (\sift, \gist). We also included a dataset of 10,000 random inputs, where each input is a 128-dimensional vector drawn from a uniform random distribution, $\mathcal{U}(0,1)$. This dataset was included because it has no structure and presents a worst-case empirical analysis. For all datasets, the only pre-processing step used is to center each input vector about the mean. 

\begin{table}[h]
\caption{Datasets used in the evaluation.}
\label{tbl:datasets}
\begin{center}
\begin{small}
\begin{tabular}{lrcc}
\toprule
\textbf{Dataset} & \textbf{Size} & \textbf{Dimension} & \textbf{Reference} \\
\midrule
\random   & 10,000        & 128 & --- \\
\glove    & 10,000        & 300 & Pennington et al.~\cite{Pennington2014} \\
\labelme  & 10,000        & 512 & Russell et al.~\cite{Russell2008}    \\
\mnist    & 10,000        & 784 & Lecun et al.~\cite{Lecun1998}      \\
\sift     & 1,000,000     & 128 & Jegou et al.~\cite{Jegou2011}      \\
\gist     & 1,000,000     & 960 & Jegou et al.~\cite{Jegou2011}      \\
\bottomrule
\end{tabular}
\end{small}
\end{center}
\end{table}

\textbf{Accuracy in identifying nearest-neighbors.} Following Yagnik et al.~\cite{Yagnik2011} and Weiss et al.~\cite{Weiss2009}, we evaluate each algorithm's ability to identify nearest neighbors using two performance metrics: area under the precision-recall curve (AUPRC) and mean average precision (mAP). For all datasets, following Jin et al.~\cite{jin2014density}, given a query point, we computed a ranked list of the top 2\% of true nearest neighbors (excluding the query) based on Euclidean distance between vectors in input space. Each hashing algorithm similarly generates a ranked list of predicted nearest neighbors based on Hamming distance between hashes ($h_1$). We then compute the mAP and AUPRC on the two ranked lists. Means and standard deviations are calculated over 500 runs.

\textbf{Time and space complexity.} While mAP and AUPRC evaluate the quality of hashes, in practice, such gains may not be practically usable if constraints such as query time, indexing time, and memory usage are not met. We use two approaches to evaluate the time and space complexity of each algorithm's multi-probe version ($h_2$).

The goal of the first evaluation is to test how the mAP of \simhash and \densefly fare under the same query time. For each algorithm, we hash the query to a bin. Bins nearby the query bin are probed with an increasing search radius. For each radii, the mAP is calculated for the ranked candidates. As the search radius increases, more candidates are pooled and ranked, leading to larger query times and larger mAP scores. 

The goal of the second evaluation is to roughly equate the performance (mAP and query time) of both algorithms and compare the time to build the index and the memory consumed by the index. To do this, we note that to store the hashes, \densefly requires $k$ times more memory to store the high-dimensional hashes. Thus, we allow \simhash to pool candidates from $k$ independent hash tables while using only $1$ hash table for \densefly. While this ensures that both algorithms use roughly the same memory to store hashes, \simhash also requires: (a) $k$ times the computational complexity of \densefly to generate $k$ hash tables, (b) roughly $k$ times more time to index the input vectors to bins for each hash table, and (c) more memory for storing bins and indices. Following Lv et al.~\cite{Lv2007}, we evaluate mAP at a fixed number of nearest neighbors (100). As before, each query is hashed to a bin. If the bin has $\geq 100$ candidates, we stop and rank these candidates. Else, we keep increasing the search radius by 1 until we have least $100$ candidates to rank. We then rank all candidates and compute the mAP versus the true 100 nearest-neighbors. Each algorithm uses the minimal radius required to identify 100 candidates (different search radii may be used by different algorithms).

\section{Results}

First, we present theoretical analysis of the \densefly and \flyhash high-dimensional hashing algorithms, proving that \densefly generates hashes that are locality-sensitive according to Euclidean and cosine distances, and that \flyhash preserves rank similarity for any $\ell_p$ norm; we also prove that pseudo-hashes are effective for reducing the search space of candidate nearest-neighbors without increasing computational complexity. Second, we evaluate how well each algorithm identifies nearest-neighbors using the hash function, $h_1$, based on its query time, computational complexity, memory consumption, and indexing time. Third, we evaluate the multi-probe versions of \simhash, \flyhash, and \densefly ($h_2$). 


\subsection{Theoretical analysis of high-dimensional hashing algorithms}

\begin{lemma}
\densefly generates hashes that are locality-sensitive.
\label{lm:flypartial}
\end{lemma}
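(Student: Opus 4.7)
The plan is to exhibit \densefly as a high-dimensional variant of \simhash applied after a sparse binary random projection, and then reduce the collision analysis to the classical Goemans--Williamson sign identity. Each bit of the hash is $h_1(x)_i = \mathrm{sgn}(r_i \cdot x)$, where $r_i$ is the $0/1$-indicator of a uniformly random subset $S_i \subseteq [d]$ of size $\lfloor \alpha d \rfloor$, and the $mk$ projections are independent. It therefore suffices to analyze the per-bit collision probability $\mathrm{Pr}[h_1(p)_i = h_1(q)_i]$; the full-hash collision probability in Definition~1 is then its $mk$-th power, which is still a valid $\mathrm{sim}(p,q) \in [0,1]$ monotone in the per-bit one.

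First I would show that the per-bit collision probability depends only on the cosine angle between $p$ and $q$. Fix $i$, let $X = r_i \cdot p$ and $Y = r_i \cdot q$, and use the mean-centering preprocessing (Section~\ref{sec:metrics}) so that $\sum_j p_j = \sum_j q_j = 0$. A direct computation with sampling without replacement gives $\expt[X] = \expt[Y] = 0$ and
\[
\mathrm{Cov}(X,Y) \;=\; \frac{\lfloor\alpha d\rfloor\,(d-\lfloor\alpha d\rfloor)}{d(d-1)}\,\langle p, q\rangle,
\]
with analogous expressions for $\mathrm{Var}(X)$ and $\mathrm{Var}(Y)$ involving $\|p\|^2$ and $\|q\|^2$. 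Consequently the correlation coefficient $\rho(p,q)$ coincides exactly with the cosine similarity $\langle p, q\rangle / (\|p\|\,\|q\|)$, independent of $\alpha$.

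Next I would invoke a bivariate central limit theorem for linear statistics of simple random samples (equivalently, Hoeffding's combinatorial CLT) so that, as $d$ and $\alpha d$ grow, $(X,Y)$ converges to a bivariate Gaussian with zero mean and correlation $\rho(p,q)$. The Goemans--Williamson identity then yields
\[
\mathrm{Pr}[\mathrm{sgn}(X) = \mathrm{sgn}(Y)] \;=\; 1 - \frac{\arccos \rho(p,q)}{\pi} \;\in\; [0,1],
\]
which is monotonically non-decreasing in $\cos(p,q)$. Taking this as the per-bit similarity measure and raising it to the $mk$-th power gives the full-hash $\mathrm{sim}(p,q)$ of Definition~1, establishing locality sensitivity.

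The main obstacle is making the Gaussian approximation rigorous for a finite-dimensional sparse binary projection: the vector $r_i$ is neither Gaussian nor continuous, and its coordinates are mildly dependent because the sampling is without replacement. A clean way around this is to invoke the earlier result of Dasgupta et al.~\cite{Dasgupta2017} that sparse binary random projections approximately preserve pairwise $\ell_2$ distances, and hence angles, and to feed the preserved angles directly into the Goemans--Williamson formula above. A Berry--Esseen bound on the scalar sum $r_i \cdot x$ would supply an explicit convergence rate but is not required for the qualitative locality-sensitive property asserted in the lemma.
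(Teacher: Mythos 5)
Your proposal is sound but proceeds along a genuinely different route from the paper. The paper's own proof never analyzes per-bit collision probabilities: it models the sparse binary matrix entries as Bernoulli($\alpha$), shows by linearity of expectation that the expected \densefly activation $\alpha\sum_i x_i$ matches the expected activation of a \simhash whose Gaussian entries have mean $\mu=\alpha$, and concludes that \densefly ``approximates'' a high-dimensional \simhash (of dimension $mk$, at $k$-fold lower cost), inheriting locality sensitivity for cosine/Euclidean distance by transitivity from Charikar/Datar et al. You instead work bit-by-bit: exact first and second moments of $(r_i\cdot p,\, r_i\cdot q)$ under sampling without replacement (your covariance formula is correct, and under the zero-coordinate-sum centering the correlation is exactly the cosine), then a combinatorial/bivariate CLT plus the Goemans--Williamson sign identity to get $\Pr[\mathrm{sgn}(X)=\mathrm{sgn}(Y)] \approx 1-\arccos\rho/\pi$, and independence across the $mk$ projections to match Definition~1. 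What your route buys is an explicit, monotone collision-probability formula and a path to quantitative error bounds (Berry--Esseen); what the paper's route buys is brevity and a direct statement of the ``same performance at $1/k$ the cost'' claim that its empirical Figure~S1 supports. Both arguments are asymptotic and heuristic at comparable points (yours in $d$ and $\alpha d$, the paper's in $m$ with only first moments matched), so your level of rigor is no worse than the paper's.

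Two caveats. First, your fallback suggestion --- invoking Dasgupta et al.'s result that sparse binary projections approximately preserve $\ell_2$ distances and ``feeding the preserved angles directly into the Goemans--Williamson formula'' --- does not work as stated: that identity requires the projection directions to be rotationally symmetric (Gaussian), and distance preservation by the sparse binary map does not by itself yield the sign-agreement probability; the CLT step is what licenses the formula, so it cannot be discarded. Second, your derivation needs each input vector's coordinates to sum to zero; the paper's preprocessing (``center each input vector about the mean'') is consistent with this reading, but you should state the assumption explicitly, since otherwise $\expt[X]\neq 0$ and the correlation is a centered cosine rather than the cosine of $p$ and $q$ themselves.
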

\begin{proof}
The idea of the proof is to show that \densefly approximates a high-dimensional \simhash, but at $k$ times lower computational cost. Thus, by transitivity, \densefly preserves cosine and Euclidean distances, as shown for \simhash~\cite{Datar2004}.

The set $S$ (Algorithm 1), containing the indices that each Kenyon cell (KC) samples from, can be represented as a sparse binary matrix, $M$. In Algorithm~1, we fixed each column of $M$ to contain exactly $\lfloor \alpha d \rfloor$ ones. However, maintaining exactly $\lfloor \alpha d \rfloor$ ones is not necessary for the hashing scheme, and in fact, in the fly's olfactory circuit, the number of projection neurons sampled by each KC is approximately a binomial distribution with a mean of 6~\cite{Caron2013,Stevens2015}.
Suppose the projection directions in the fly's hashing schemes (\flyhash and \densefly) are sampled from a binomial distribution; i.e., let $M \in \{0,1\}^{dmk}$ be a sparse binary matrix whose elements are sampled from $dmk$ independent Bernoulli trials each with success probability $\alpha$, so that the total number of successful trials follows $\bd(dmk,\alpha)$. Pseudo-hashes are calculated by averaging $m$ blocks of $k$ sparse projections. Thus, the expected activation of Kenyon cell $j$ to input $x$ is:
\begin{align}
\expt[a_{DenseFly}(x)_j]=\expt[\sum_{u=k(j-1)+1}^{u=kj}\sum_i M_{ui}x_i/k]. \label{eqn:1}
\end{align}
Using the linearity of expectation,
$$\expt[a_{DenseFly}(x)_j]=k\expt[\sum_i M_{ui}x_i]/k,$$
where $u$ is any arbitrary index in $[1,mk]$. Thus, $\expt[a_{DenseFly}(x)_j]=\alpha\sum_{i}x_i$, as $m\rightarrow \infty.$ The expected value of a \densefly activation is given in Equation~\eqref{eqn:1} with special condition that $k=1$.


Similarly, the projection directions in \simhash are sampled from a Gaussian distribution; i.e., let $M^{D} \in \mathbb{R}^{d\times m}$ be a dense matrix whose elements are sampled from $\normal(\mu,\sigma)$. Using linearity of expectation, the expected value of the $j\textsuperscript{th}$ \simhash projection to input $x$ is:
$$\expt[a_{SimHash}(x)_j]=\expt[\sum_i M^{D}_{ji}x_i]=\mu \sum_i x_i.$$ 
Thus, $\expt[a_{DenseFly}(x)_j]=\expt[a_{SimHash}(x)_j]\ \forall\ j\in [1,m]$ if $\mu=\alpha$.

In other words, sparse activations of \densefly approximate the dense activations of \simhash as the hash dimension increases. Thus, a \densefly hash approximates \simhash of dimension $mk$. In practice, this approximation works well even for small values of $m$ since hashes depend only on the sign of the activations. 
\end{proof}

We supported this result by empirical analysis showing that the AUPRC for \densefly is very similar to that of \simhash when using equal dimensions (Figure~S1). \densefly, however, takes $k$-times less computation. In other words, we proved that the computational complexity of \simhash could be reduced $k$-fold while still achieving the same performance.

We next analyze how \flyhash preserves a popular similarity measure for nearest-neighbors, called rank similarity~\cite{Yagnik2011}, and how \flyhash better separates items in high-dimensional space compared to \wtahash (which was designed for rank similarity). Dasgupta et al.~\cite{Dasgupta2017} did not analyze \flyhash for rank similarity neither theoretically nor empirically.


\begin{lemma}
\flyhash preserves rank similarity of inputs under any $\ell_p$ norm.
\end{lemma}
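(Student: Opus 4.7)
The plan is to show that each bit of a \flyhash hash is determined entirely by the \emph{ordinal} structure of the $mk$ activations, and that this ordinal structure inherits the rank similarity of the inputs---a property defined purely in terms of coordinate orderings and therefore insensitive to which $\ell_p$ norm measures magnitudes in input space. First, I would formalize the rank similarity measure in the style of Yagnik et al.\ as, say, the probability that for a random pair of distinct coordinate indices $(i,j)$ we have $p_i > p_j \Leftrightarrow q_i > q_j$, or equivalently via the subset-max formulation used by \wtahash. Either definition depends only on the ranks of coordinates and is invariant under any monotone per-coordinate transformation; in particular it is unchanged by $\ell_p$ normalization or rescaling of the inputs, so the target similarity itself is $\ell_p$-agnostic.

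Second, I would unpack the hash construction. Each activation $a(x)_j = \sum_{i \in S_j} x_i$ is a random sparse partial sum, and the global WTA sets $h_1(x)_j = 1$ precisely when $a(x)_j$ lies among the top $m$ of the $mk$ activations, i.e., when $a(x)_j > a(x)_\ell$ for at least $mk - m$ of the remaining indices $\ell$. Thus each hash bit is a deterministic function of the pairwise orderings of the activation vector, with no dependence on absolute magnitudes. The proof therefore reduces to showing that the pairwise orderings of activations track the pairwise orderings of input coordinates.

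Third, for this inheritance step I would fix two random subsets $S_j$ and $S_\ell$ and observe that the events $\{a(p)_j > a(p)_\ell\}$ and $\{a(q)_j > a(q)_\ell\}$ agree whenever the signs of $\sum_{i \in S_j \setminus S_\ell} x_i - \sum_{i \in S_\ell \setminus S_j} x_i$ coincide for $x=p$ and $x=q$. A coupling argument over coordinate pairs then lower-bounds the probability of sign agreement by a monotone function of $\mathrm{sim}_{\mathrm{rank}}(p,q)$: each pair of coordinates whose relative order is shared between $p$ and $q$ contributes positively to the agreement of the induced partial-sum sign. Combining over all $mk$ positions yields that $\mathrm{Pr}[h_1(p)_j = h_1(q)_j]$ is a monotone function of $\mathrm{sim}_{\mathrm{rank}}(p,q)$ in which no $\ell_p$ magnitude ever appears, which is exactly the claim.

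The main obstacle is this last quantitative step: translating a given level of input rank similarity into partial-sum rank similarity when $\lfloor \alpha d \rfloor$ is moderate, so that many coordinates are averaged and individual coordinate orderings are partially ``washed out.'' A Hoeffding- or martingale-style concentration bound applied to the difference of the subset sums, together with a careful handling of ties on the boundary, is likely needed to make the coupling quantitative. Doing so without implicitly invoking any magnitude-based assumption on the inputs---so that the argument remains valid for inputs arising from any $\ell_p$ space---is the technical crux of the lemma.
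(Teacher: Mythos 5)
There is a genuine gap, and it lies at the heart of your reduction. You claim that because each \flyhash bit depends only on the ordinal structure of the $mk$ activations, the problem ``reduces to showing that the pairwise orderings of activations track the pairwise orderings of input coordinates,'' and you then assert that the bit-agreement probability is a monotone function of a purely ordinal $\mathrm{sim}_{\mathrm{rank}}(p,q)$ ``in which no $\ell_p$ magnitude ever appears.'' That reduction fails: the activations $a(x)_j=\sum_{i\in S_j}x_i$ are linear functionals of the input, and their relative order depends on coordinate \emph{magnitudes}, not merely on coordinate ranks. Two inputs with identical coordinate orderings (perfect rank similarity in your sense) can produce entirely different orderings of subset sums --- e.g.\ $p=(10,1,0.9,0.8)$ versus $q=(1.1,1,0.9,0.8)$ with $S_j=\{1\}$ and $S_\ell=\{2,3,4\}$ give $a(p)_j>a(p)_\ell$ but $a(q)_j<a(q)_\ell$ --- so the monotone-coupling claim in your third step is false as stated, and the concentration argument you defer to (``a Hoeffding- or martingale-style bound is likely needed'') is not a technical detail but the entire unproven content of the lemma under your reading. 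In short, the proposal both adopts a reading of ``rank similarity'' that the hash cannot satisfy magnitude-free, and leaves its one quantitative step as a conjecture.

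The paper proves something different and more modest: a deterministic perturbation-stability statement in which the $\ell_p$ norm measures the size of the input perturbation. For $x'=x+\delta x$ with $\|\delta x\|_p=\epsilon$, a constrained optimization (Lagrange multipliers, given in the supplement) bounds every activation change by $|a'_j-a_j|\le d\alpha\epsilon/(d\alpha)^{1/p}$, hence any pairwise activation gap changes by at most twice that amount. Taking $\beta>0$ to be the smallest gap between the weakest ``winner'' and the strongest ``loser,'' the entire top-$m$ set --- and therefore the hash --- is unchanged whenever $\epsilon<\beta\,(d\alpha)^{1/p}/(2d\alpha)$, and as $\epsilon$ grows the orders with the smallest gaps are violated first, so Hamming distance grows gradually with input distance. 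If you want to salvage your outline, you would need to replace the ordinal-coupling step with exactly this kind of magnitude-aware stability bound, i.e.\ control $|\,(a'_j-a'_u)-(a_j-a_u)\,|$ uniformly over pairs in terms of $\|\delta x\|_p$, rather than trying to eliminate magnitudes from the argument altogether.
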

\begin{proof}
The idea is to show that small perturbations to an input vector does not affect its hash.

Consider an input vector $x$ of dimensionality $d$ whose hash of dimension $mk$ is to be computed. The activation of the $j$\textsuperscript{th} component (Kenyon cell) in the hash is given by $a_j=\sum_{i\in S_j}x_i$, where $S_j$ is the set of dimensions of $x$ that the $j$\textsuperscript{th} Kenyon cell samples from. Consider a perturbed version of the input, $x' = x + \delta x$, where $||\delta x||_p=\epsilon$. The activity of the $j$\textsuperscript{th} Kenyon cell to the perturbed vector $x'$ is given by: 
\begin{align}
a'_j=\sum_{i\in S_j}x'_i=a_j+\sum_{i\in S_j}\delta x_i. \nonumber
\end{align}
By the method of Lagrange multipliers, $|a'_j-a_j| \leq d\alpha\epsilon\ /\sqrt[\leftroot{-3}\uproot{3}p]{d\alpha} \ \forall j$ (Supplement).  Moreover, for any index $u\neq j$,
\begin{align}
||a'_j-a'_u|-|a_j-a_u|| & \leq |(a'_j-a'_u)-(a_j-a_u)| \leq 2d\alpha\epsilon\ /\sqrt[\leftroot{-3}\uproot{3}p]{d\alpha}. \nonumber
\end{align}
In particular, let $j$ be the index of $h_1(x)$ corresponding to the smallest activation in the `winner' set of the hash (i.e., the smallest activation such that its bit in the hash is set to 1). Conversely, let $u$ be the index of $h_1(x)$ corresponding to the largest activation in the `loser' set of the hash. Let $\beta=a_j-a_u >0$. Then, $$\beta-2d\alpha\epsilon\ / \sqrt[\leftroot{-3}\uproot{3}p]{d\alpha} \leq |a'_j-a'_u| \leq \beta+2d\alpha\epsilon/\sqrt[\leftroot{-3}\uproot{3}p]{d\alpha}.$$
For $\epsilon < \beta \sqrt[\leftroot{-3}\uproot{3}p]{d\alpha}/2d\alpha$, it follows that $(a'_j-a'_u) \in [\beta - 2d\alpha \epsilon\ / \sqrt[\leftroot{-3}\uproot{3}p]{d\alpha},\beta+2d\alpha \epsilon\ / \sqrt[\leftroot{-3}\uproot{3}p]{d\alpha}]$. 
Thus, $a'_j>a'_u$. Since, $j$ and $u$ correspond to the lowest difference between the elements of the winner and loser sets, it follows that all other pairwise rank orders defined by \flyhash are also maintained. Thus, \flyhash preserves rank similarity between two vectors whose distance in input space is small. As $\epsilon$ increases, the partial order corresponding to the lowest difference in activations is violated first leading to progressively higher Hamming distances between the corresponding hashes.
\end{proof}

\begin{lemma}
\flyhash encodes $m$-times more pairwise orders than \wtahash for the same hash dimension.
\label{lm:po}
\end{lemma}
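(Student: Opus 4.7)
The plan is to count, for each scheme, the number of pairwise ordering relations of the form ``activation at index $i$ exceeds activation at index $j$'' that are implied by the hash alone (without access to the underlying activations). Since both hashes operate on $mk$ underlying activation values, this gives a fair comparison and reduces the statement to a direct counting argument.

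For \wtahash, the hash is assembled from $m$ independent blocks of length $k$. Within each block, the hash records only the identity of the maximum entry, which implies that this ``winner'' exceeds each of the other $k-1$ entries in the block, and nothing more. No comparison is implied across blocks (the maxima of different blocks can be in any relative order consistent with a single 1 per block). Therefore each block contributes exactly $k-1$ pairwise orders, and the total number encoded by a \wtahash hash is $m(k-1)$.

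For \flyhash, the WTA is applied globally over all $mk$ activations, retaining the top $m$ as winners and silencing the remaining $m(k-1)$ as losers. The hash thus implies that every winner exceeds every loser, which is $m \cdot m(k-1) = m^{2}(k-1)$ pairwise orders. (Relative orders within the winner set or within the loser set are not determined by the binary hash, just as relative orders of block-maxima are not determined in \wtahash, so I will not double-count these.) Taking the ratio yields $m^{2}(k-1)\,/\,m(k-1)=m$, which is the desired factor.

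The steps I would execute are: (i) fix the notion of ``pairwise order encoded by a hash'' as the set of index pairs $(i,j)$ such that every activation vector consistent with the hash satisfies $a_i>a_j$; (ii) verify the two counts above by showing, in each case, that no additional pair is forced (e.g., by exhibiting activation vectors consistent with a given hash that realize arbitrary internal orderings of winners and losers for \flyhash, and arbitrary orderings of block-maxima for \wtahash); (iii) conclude by taking the ratio. The only subtle step is (ii), where one must be careful that the counts are tight rather than mere lower bounds; this is what ensures the ``$m$-times'' factor is exact rather than an inequality, and it is the step I would expect to write most carefully.
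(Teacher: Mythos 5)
Your proposal matches the paper's argument: both count $m(k-1)$ pairwise orders for \wtahash (one winner per block of length $k$) and $m^2(k-1)$ for \flyhash (each of the $m$ global winners dominating the $m(k-1)$ losers), then take the ratio. Your added step (ii) on verifying tightness of these counts is a careful refinement the paper omits, but the approach is essentially the same.
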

\begin{proof}
The idea is that \wtahash imposes a local constraint on the winner-take-all (exactly one 1 in each block of length $k$), whereas \flyhash uses a global winner-take-all, which allows \flyhash to encode more pairwise orders. 

We consider the pairwise order function $PO(X,Y)$ defined by Yagnik et al.~\cite{Yagnik2011}, where $(X,Y)$ are the WTA hashes of inputs $(x,y)$. In simple terms, $PO(X,Y)$ is the number of inequalities on which the two hashes $X$ and $Y$ agree.

To compute a hash, \wtahash concatenates pairwise orderings for $m$ independent permutations of length $k$. Let $i$ be the index of the $1$ in a given permutation. Then, $x_i\geq x_j \ \forall \ j \in [1,k]\setminus \{i\}$. Thus, a \wtahash denotes $m(k-1)$ pairwise orderings. The WTA mechanism of \flyhash encodes pairwise orderings for the top $m$ elements of the activations, $a$. Let $W$ be the set of the top $m$ elements of $a$ as defined in Algorithm~\ref{alg:flyhash}. Then, for any $j \in W$, $a_j\geq a_i\ \forall i\in [1,mk]\setminus W$. Thus, each $j \in W$ denotes $m(k-1)$ inequalities, and \flyhash encodes $m^2(k-1)$ pairwise orderings. Thus, the pairwise order function for \flyhash encodes $m$ times more orders.
\end{proof}

Empirically, we found that \flyhash and \densefly achieved much higher Kendall-$\tau$ rank correlation than \wtahash, which was specifically designed to preserve rank similarity~\cite{Yagnik2011} (Results, Table~\ref{tbl:kt}). This validates our theoretical results. 

\begin{lemma} 
Pseudo-hashes approximate \simhash with increasing WTA factor $k$.
\end{lemma}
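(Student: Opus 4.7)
The plan is to express the pseudo-hash activation as a standard random linear projection whose effective weights are themselves averages of independent Bernoulli entries of the sparse projection matrix $M$; then to invoke the Central Limit Theorem in the WTA factor $k$ to argue these averaged weights converge to the Gaussian weights that define \simhash. Concretely, substituting the Kenyon-cell activations $a(x)_u=\sum_i M_{ui}x_i$ into the pseudo-hash definition gives
$$\frac{1}{k}\sum_{u=k(j-1)+1}^{kj} a(x)_u \;=\; \sum_i W_{ji}\,x_i, \qquad W_{ji}=\frac{1}{k}\sum_{u=k(j-1)+1}^{kj} M_{ui}.$$
Under the binomial sparsity model already used in Lemma~\ref{lm:flypartial}, the $M_{ui}$ are i.i.d.\ Bernoulli$(\alpha)$, so for each $j$ and $i$ the effective weight $W_{ji}$ is the sample mean of $k$ such variables.

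Next I would apply the CLT coordinate-wise: as $k$ grows, $W_{ji}$ is approximately $\normal(\alpha,\alpha(1-\alpha)/k)$, and the family $\{W_{ji}\}$ is independent across $i$ (disjoint columns of $M$) and across $j$ (disjoint blocks of rows), matching the independence structure of a \simhash projection matrix. Because inputs are mean-centered so that $\sum_i x_i=0$, the deterministic offset in each weight drops out,
$$\sum_i W_{ji}\,x_i \;=\; \sum_i (W_{ji}-\alpha)\,x_i,$$
so the pseudo-hash activation is approximately a zero-mean Gaussian-weighted linear combination of the $x_i$. Since the pseudo-hash bit is $\mathrm{sgn}(\cdot)$ of this quantity and the sign function is invariant under positive rescaling, the common variance $\alpha(1-\alpha)/k$ washes out and each pseudo-hash bit converges in distribution to the corresponding \simhash bit. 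Independence across $j$ then lifts this coordinate-wise convergence to convergence in distribution of the full $m$-bit pseudo-hash, and in particular of the collision probability $\Pr[p(x)=p(y)]$ to the \simhash collision probability, which is what makes ``approximates \simhash'' meaningful for the multi-probe application.

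The main obstacle is that the hash bit is a discontinuous function (a sign) of a quantity that converges only in distribution, so the CLT does not immediately transfer to a statement about the hash outputs themselves. To handle this I would either invoke a Berry--Esseen style quantitative CLT to control the Kolmogorov distance between the Bernoulli-sum projection and its Gaussian limit, or argue more softly that the limiting Gaussian projection has a continuous density at zero, so that the event that the Bernoulli-based and Gaussian-based signs disagree has vanishing probability as $k\to\infty$ (at least for $x$ with non-degenerate support, which is the relevant case). Either route turns the distributional CLT into the pointwise statement the lemma requires and shows that, in the limit of large $k$, the pseudo-hash is indistinguishable from a \simhash of the same output dimension $m$.
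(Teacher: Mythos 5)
Your proposal is correct in substance but takes a genuinely different, and more rigorous, route than the paper. The paper's own proof is a moment-matching argument: it extends Equation~\eqref{eqn:1} to get $\expt[a_{pseudo}(x)_j]=\alpha\sum_i x_i$ as $k\to\infty$, observes that this equals the expected \simhash activation when the Gaussian entries have mean $\mu=\alpha$ (and that the variances match under a condition on $\sigma^2$), and concludes informally that \simhash ``can be interpreted as'' the pseudo-hash of a very high-dimensional fly hash; it never addresses the passage from activations to sign bits beyond a remark that only the sign matters. You instead rewrite the pseudo-hash activation exactly as $\sum_i W_{ji}x_i$ with $W_{ji}$ a Bernoulli$(\alpha)$ sample mean, invoke the CLT in $k$, use the mean-centering $\sum_i x_i=0$ to kill the deterministic offset (where the paper keeps the offset and compares to a nonstandard \simhash with $\mu=\alpha$; your route depends on reading the preprocessing as per-vector coordinate centering), note that the $1/k$ variance washes out under $\mathrm{sgn}$, and explicitly confront the discontinuity of the sign via Berry--Esseen or continuity of the limiting density at zero. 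This buys an actual distributional statement about the hash bits, which the paper does not supply. One refinement you should make explicit: per-bit convergence in distribution is nearly vacuous (each bit is Bernoulli), so the meaningful claim is the one you state at the end about collision probabilities; establishing it requires the bivariate CLT applied jointly to $\bigl(\sum_i(W_{ji}-\alpha)x_i,\ \sum_i(W_{ji}-\alpha)y_i\bigr)$, whose limiting covariance $\alpha(1-\alpha)\langle x,y\rangle/k$ matches the \simhash Gaussian structure up to scale, together with the observation that the limit puts no mass on the sign-boundary when $x,y\neq 0$. With that spelled out, your argument actually proves a sharper statement than the paper's, and also quietly corrects the paper's variance condition, which for averaged blocks should carry the $1/k$ factor you identify (immaterial for the sign, as you note).
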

\vspace{-0.15in}
\begin{proof}
The idea is that expected activations of pseudo-hashes calculated from sparse projections is the same as the activations of \simhash calculated from dense projections. 

The analysis of Equation~\eqref{eqn:1} can be extended to show that pseudo-hashes approximate \simhash of the same dimensionality. Specifically,
%

%
$$\expt[a_{pseudo}(x)_j]=\alpha\sum_{i}x_i,\ as\ k\rightarrow \infty.$$
%
Similarly, the projection directions in \simhash are sampled from a Gaussian distribution; i.e., let $M^{D} \in \mathbb{R}^{d\times m}$ be a dense matrix whose elements are sampled from $\normal(\mu,\sigma)$. Using linearity of expectation, the expected value of the $j\textsuperscript{th}$ \simhash projection is:
$$\expt[a_{SimHash}(x)_j]=\expt[\sum_i M^{D}_{ji}x_i]=\mu \sum_i x_i.$$ 
Thus, $\expt[a_{pseudo}(x)_j]=\expt[a_{SimHash}(x)_j]\ \forall\ j\in [1,m]$ if $\mu=\alpha$.Similarly, the variances of $a_{SimHash}(x)$ and $a_{pseudo}(x)$ are equal if $\sigma^2=\alpha(1-\alpha)$. Thus, \simhash itself can be interpreted as the pseudo-hash of a \flyhash with very large dimensions. 
\end{proof}
Although in theory, this approximation holds for only large values of $k$, in practice the approximation can operate under a high degree of error since equality of hashes requires only that the sign of the activations of pseudo-hash be the same as that of \simhash. 

Empirically, we found that the performance of only using pseudo-hashes (not using the high-dimensional hashes) for ranking nearest-neighbors performs similarly with \simhash for values of $k$ as low as $k=4$ (Figure~S2 and S3), confirming our theoretical results. Notably, the computation of pseudo-hashes is performed by re-using the activations for \densefly, as explained in Algorithm \ref{alg:flyhash} and Figure \ref{fig:overview}. Thus, pseudo-hashes incur little computational cost and provide an effective tool for reducing the search space due to their low dimensionality.


\subsection{Empirical evaluation of low- versus high-dimensional hashing}

\begin{figure*}[bt]
\begin{center}
\includegraphics[width=\textwidth]{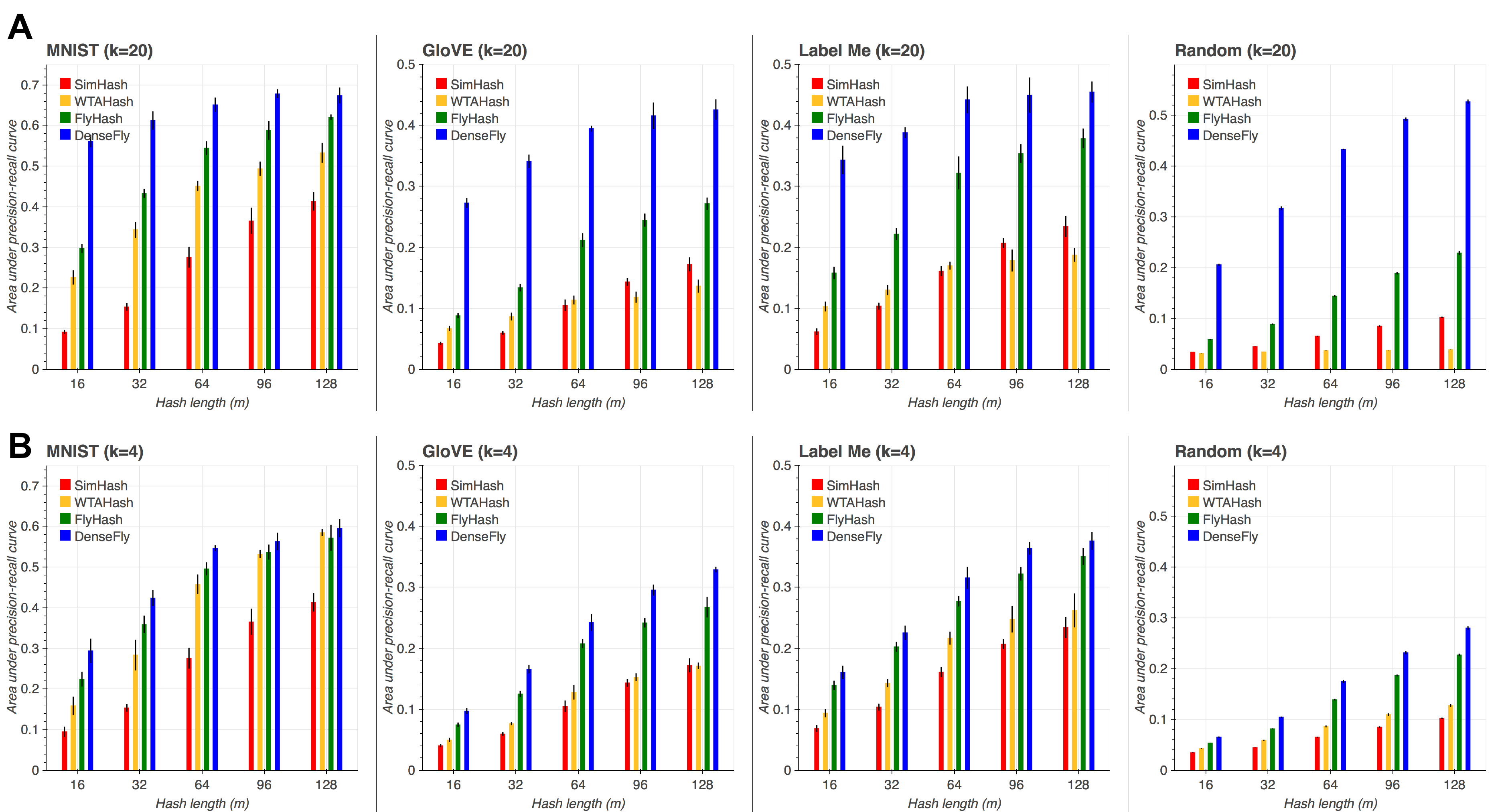}
\caption{\textbf{Precision-recall for the \mnist, \glove, \labelme, and \random datasets}. A) $k=20$. B) $k=4$. In each panel, the $x$-axis is the hash length, and the $y$-axis is the area under the precision-recall curve (higher is better). For all datasets and hash lengths, \densefly performs the best.}
\label{fig:10kprc}
\end{center}
\end{figure*}

We compared the quality of the hashes ($h_1$) for identifying the nearest-neighbors of a query using the four 10k-item datasets (Figure~\ref{fig:10kprc}A). For nearly all hash lengths, \densefly outperforms all other methods in area under the precision-recall curve (AUPRC). For example, on the \glove dataset with hash length $m=64$ and WTA factor $k=20$, the AUPRC of \densefly is about three-fold higher than \simhash and \wtahash, and almost two-fold higher than \flyhash (DenseFly=0.395, FlyHash=0.212, SimHash=0.106, WTAHash=0.112). On the \random dataset, which has no inherent structure, \densefly provides a higher degree of separability in hash space compared to \flyhash and \wtahash, especially for large $k$ (e.g., nearly 0.440 AUPRC for \densefly versus 0.140 for \flyhash, 0.037 for \wtahash, and 0.066 for \simhash with $k=20, m=64$). Figure~\ref{fig:10kprc}B shows empirical performance for all methods using $k=4$, which shows similar results.

\densefly also outperforms the other algorithms in identifying nearest neighbors on two larger datasets with 1M items each (Figure~\ref{fig:1mprc}). For example, on \sift with $m=64$ and $k=20$, \densefly achieves 2.6x/2.2x/1.3x higher AUPRC compared to \simhash, \wtahash, and \flyhash, respectively. These results demonstrate the promise of high-dimensional hashing on practical datasets.

\begin{figure*}[bt]
\begin{center}
\includegraphics[width=\textwidth]{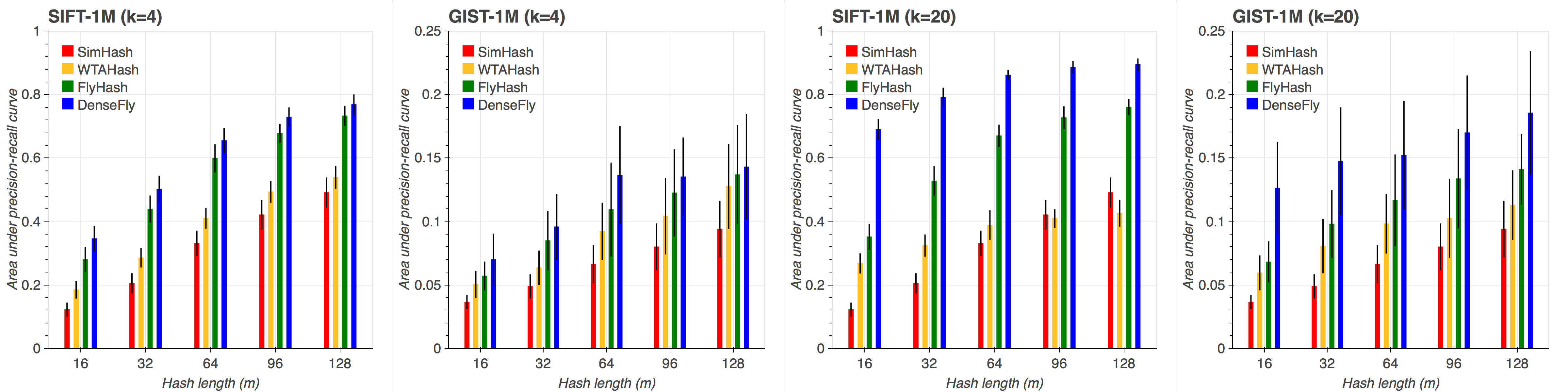}
\caption{\textbf{Precision-recall for the \sift and \gist datasets}. In each panel, the $x$-axis is the hash length, and the $y$-axis is the area under the precision-recall curve (higher is better). The first two panels shows results for \sift and \gist using $k=$; the latter two show results for $k=20$. \densefly is comparable to or outperforms all other algorithms.}
\label{fig:1mprc}
\end{center}
\end{figure*}


\subsection{Evaluating multi-probe hashing}

Here, we evaluated the multi-probing schemes of \simhash and \densefly (pseudo-hashes). Using $k=20$, \densefly achieves higher mAP for the same query time (Figure~\ref{fig:multi}A). For example, on the \glove dataset, with a query time of 0.01 seconds, the mAP of \densefly is 91.40\% higher than that of \simhash, with similar gains across other datasets. Thus, the high-dimensional \densefly is better able to rank the candidates than low-dimensional \simhash. Figure~\ref{fig:multi}B shows that similar results hold for $k=4$; i.e., \densefly achieves higher mAP for the same query time as \simhash. 


\begin{figure*}[tbp]
\begin{center}
\includegraphics[width=\textwidth]{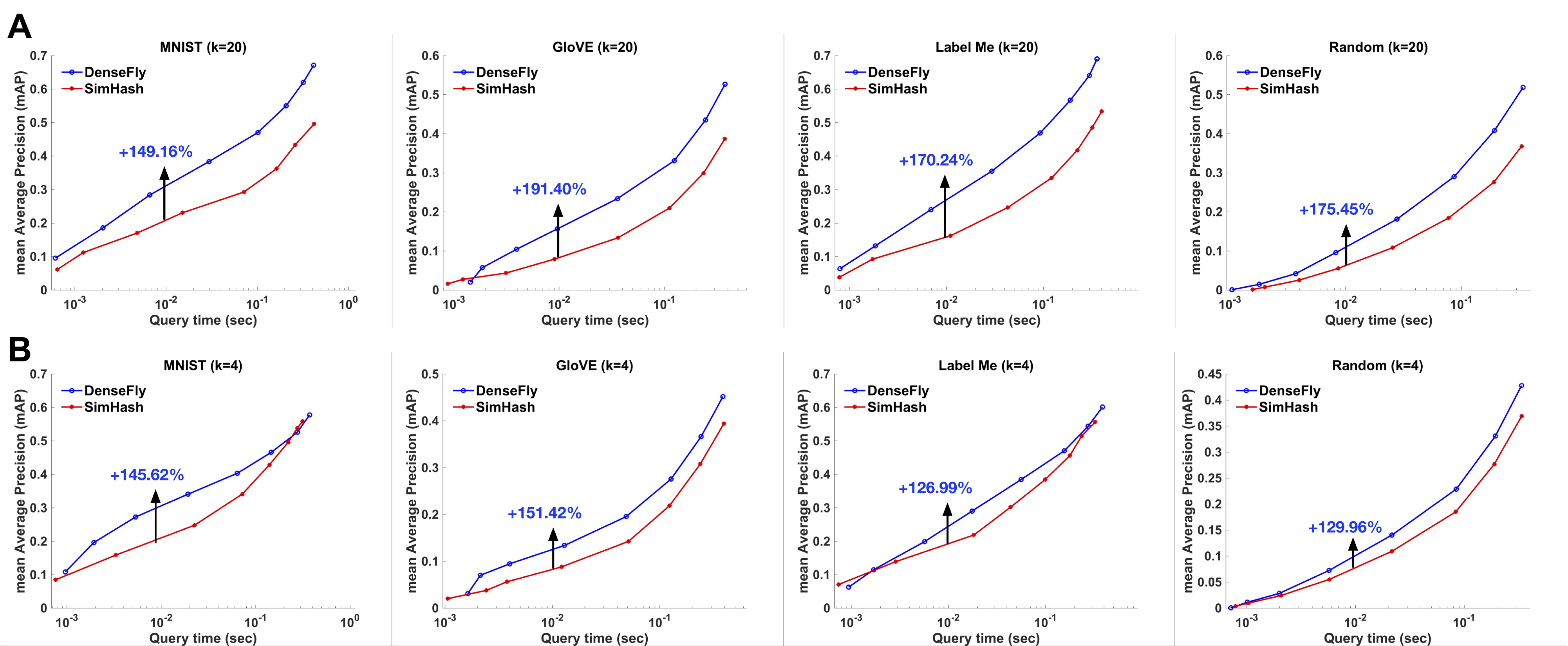}
\caption{\textbf{Query time versus mAP for the 10k-item datasets.} A) $k=20$. B) $k=4$.  In each panel, the $x$-axis is query time, and the $y$-axis is mean average precision (higher is better) of ranked candidates using a hash length $m=16$. Each successive dot on each curve corresponds to an increasing search radius. For nearly all datasets and query times, \densefly with pseudo-hash binning performs better than \simhash with multi-probe binning. The arrow in each panel indicates the gain in performance for \densefly at a query time of 0.01 seconds.} 
\label{fig:multi}
\end{center}
\vskip -0.2in
\end{figure*}

Next, we evaluated the multi-probe schemes of \simhash, \flyhash (as originally conceived by Dasgupta et al.~\cite{Dasgupta2017} without multi-probe), our \flyhash multi-probe version (called FlyHash-MP), and \densefly based on mAP as well as query time, indexing time, and memory usage. To boost the performance of \simhash, we pooled and ranked candidates over $k$ independent hash tables as opposed to $1$ table for \densefly (Section~\ref{sec:metrics}). Table~\ref{tbl:probe} shows that for nearly the same mAP as \simhash, \densefly significantly reduces query times, indexing times, and memory consumption. For example, on the Glove-10K dataset, \densefly achieves marginally lower mAP compared to \simhash (0.966 vs.\@ 1.000) but requires only a fraction of the querying time (0.397 vs.\@ 1.000), indexing time (0.239 vs.\@ 1.000) and memory (0.381 vs.\@ 1.000). Our multi-probe \flyhash algorithm improves over the original \flyhash, but it still produces lower mAP compared to \densefly. Thus, \densefly more efficiently identifies a small set of high quality candidate nearest-neighbors for a query compared to the other algorithms.



\begin{table*}[tbp]
\caption{\textbf{Performance of multi-probe hashing for four datasets.} Across all datasets, \densefly achieves similar mAP as \simhash, but with 2x faster query times, 4x fewer hash tables, 4--5x less indexing time, and 2--4x less memory usage. FlyHash-MP evaluates our multi-probe technique applied to the original FlyHash algorithm. \densefly and FlyHash-MP require similar indexing time and memory, but \densefly achieves higher mAP. \flyhash without multi-probe ranks the entire database per query; it therefore does not build an index and has large query times. Performance is shown normalized to that of \simhash. We used WTA factor, $k=4$ and hash length, $m=16$. } 
\label{tbl:probe}
\begin{center}
\begin{small}
\begin{tabular}{llcccrrr}
\toprule
\textbf{Dataset} & \textbf{Algorithm} & \textbf{\# Tables} & \textbf{mAP @ 100} & \textbf{Query} & \textbf{Indexing} & \textbf{Memory} \\
\midrule
%
GIST-100k    & \simhash    & 4 & 1.000 &  1.000 & 1.000 & 1.000 \\
         & \densefly   & 1 & 0.947 &  0.537 & 0.251 & 0.367 \\
         & \flyhashmp  & 1 & 0.716 &  0.515 & 0.252 & 0.367 \\
         & \flyhash    & 1 & 0.858 &  5.744 & 0.000 & 0.156 \\ \midrule
%
%
\mnist-10k & \simhash    & 4 & 1.000 &  1.000 & 1.000 & 1.000 \\
           & \densefly   & 1 & 0.996 &  0.669 & 0.226 & 0.381 \\   
           & \flyhashmp  & 1 & 0.909 &  0.465 & 0.232 & 0.381 \\      
           & \flyhash    & 1 & 0.985 &  1.697 & 0.000 & 0.174 \\ \midrule
%
%
\labelme-10k  & \simhash  & 4 & 1.000 &  1.000 & 1.000 & 1.000 \\
          & \densefly   & 1 & 1.075 &  0.481 & 0.242 & 0.383 \\
          & \flyhashmp  & 1 & 0.869 &  0.558 & 0.250 & 0.383 \\
          & \flyhash    & 1 & 0.868 &  2.934 & 0.000 & 0.177 \\ \midrule
%
\glove-10k & \simhash    & 4 & 1.000 &  1.000 & 1.000 & 1.000 \\
           & \densefly   & 1 & 0.966 &  0.397 & 0.239 & 0.381 \\
           & \flyhashmp  & 1 & 0.950 &  0.558 & 0.241 & 0.381 \\
           & \flyhash    & 1 & 0.905 &  1.639 & 0.000 & 0.174 \\ \midrule  
%
\bottomrule
\end{tabular}
\end{small}
\end{center}
\vspace{-0.1in}
\end{table*}

\subsection{Empirical analysis of rank correlation for each method}

Finally, we empirically compared \densefly, \flyhash, and \wtahash based on how well they preserved rank similarity~\cite{Yagnik2011}. For each query, we calculated the $\ell_2$ distances of the top 2\% of true nearest neighbors. We also calculated the Hamming distances between the query and the true nearest-neighbors in hash space. We then calculated the Kendell-$\tau$ rank correlation between these two lists of distances. Across all datasets and hash lengths tested, \densefly outperformed both \flyhash and \wtahash (Table~\ref{tbl:kt}), confirming our theoretical results. 

\begin{table*}[h]
\caption{\textbf{Kendall-$\mathbf{\tau}$ rank correlations for all 10k-item datasets}.  Across all datasets and hash lengths, \densefly achieves a higher rank correlation between $\ell_2$ distance in input space and $\ell_1$ distance in hash space. Averages and standard deviations are shown over 100 queries. All results shown are for WTA factor, $k=20$. Similar performance gains for \densefly over other algorithms with $k=4$ (not shown).} 
\label{tbl:kt}
\begin{center}
\begin{small}
\begin{tabular}{lc|ccc}
\toprule
\textbf{Dataset} & \textbf{Hash Length} & \textbf{\wtahash} & \textbf{\flyhash} & \textbf{\densefly} \\ \midrule
\mnist   & 16 & $0.204 \pm 0.10$ & $0.288 \pm 0.10$ & $\mathbf{0.425 \pm 0.08}$ \\
         & 32 & $0.276 \pm 0.10$ & $0.375 \pm 0.10$ & $\mathbf{0.480 \pm 0.13}$ \\ 
         & 64 & $0.333 \pm 0.10$ & $0.446 \pm 0.11$ & $\mathbf{0.539 \pm 0.12}$ \\ \midrule
\glove   & 16 & $0.157 \pm 0.10$ & $0.189 \pm 0.10$ & $\mathbf{0.281 \pm 0.11}$ \\ 
         & 32 & $0.169 \pm 0.10$ & $0.224 \pm 0.11$ & $\mathbf{0.306 \pm 0.11}$ \\
         & 64 & $0.183 \pm 0.11$ & $0.243 \pm 0.13$ & $\mathbf{0.311 \pm 0.13}$ \\ \midrule
\labelme & 16 & $0.141 \pm 0.08$ & $0.174 \pm 0.08$ & $\mathbf{0.282 \pm 0.08}$ \\
       & 32 & $0.157 \pm 0.08$ & $0.227 \pm 0.09$ & $\mathbf{0.342 \pm 0.11}$ \\
         & 64 & $0.191 \pm 0.09$ & $0.292 \pm 0.10$ & $\mathbf{0.368 \pm 0.10}$ \\ \midrule
\random  & 16 & $0.037 \pm 0.06$ & $0.089 \pm 0.05$ & $\mathbf{0.184 \pm 0.04}$ \\
     & 32 & $0.043 \pm 0.05$ & $0.120 \pm 0.05$ & $\mathbf{0.226 \pm 0.05}$ \\
         & 64 & $0.051 \pm 0.04$ & $0.155 \pm 0.05$ & $\mathbf{0.290 \pm 0.04}$ \\ \midrule
\bottomrule
\end{tabular}
\end{small}
\end{center}
\end{table*}

\section{Conclusions}

%

We analyzed and evaluated a new family of neural-inspired binary locality-sensitive hash functions that perform better than existing data-independent methods (\simhash, \wtahash, \flyhash) across several datasets and evaluation metrics. The key insight is to use efficient projections to generate high-dimensional hashes, which we showed can be done without increasing computation or space complexity. We proved theoretically that \densefly is locality-sensitive under the Euclidean and cosine distances, and that \flyhash preserves rank similarity for any $\ell_p$ norm. We also proposed a multi-probe version of our algorithm that offers an efficient binning strategy for high-dimensional hashes, which is important for making this scheme usable in practical applications. Our method also performs well with only 1 hash table, which also makes this approach easier to deploy in practice. Overall, our results support findings that dimensionality expansion may be a ``blessing''~\cite{Gorban2018,Delalleau2011,Chen2013}, especially for promoting separability for nearest-neighbors search. 


There are many directions for future work. First, we focused on data-independent algorithms; biologically, the fly can ``learn to hash''~\cite{Hige2015} but learning occurs online using reinforcement signals, as opposed to offline from a fixed database~\cite{Wang2016}. Second, we fixed the sampling rate $\alpha=0.10$, as per the fly circuit; however, more work is needed to understand how optimal sampling complexity changes with respect to input statistics and noise. Third, most prior work on multi-probe LSH have assumed that hashes are low-dimensional; while pseudo-hashes represent one approach for binning high-dimensional data via a low-dimensional intermediary, more work is needed to explore other possible strategies. Fourth, there are methods to speed-up random projection calculations, for both Gaussian matrices~\cite{Dasgupta2011,Andoni2015} and sparse binary matrices, which can be applied in practice. 

\section{Code availability}
Source code for all algorithms is available at: http://www.github.com/dataplayer12/Fly-LSH

\ifCLASSOPTIONcaptionsoff
  \newpage
\fi



\bibliographystyle{IEEEtran}
\bibliography{IEEEabrv,references}
%



%

\begin{IEEEbiographynophoto}{Jaiyam Sharma} received his Bachelor of Technology in Engineering Physics from Indian Institute of Technology Delhi, India. He received a Master of Engineering in Electrical and Electronic Engineering from Toyohashi University of Technology, Japan. His research interests are computer vision algorithms for medical diagnostics. He is currently a doctoral candidate at The University of Electro-Communications, Tokyo and a collaborator with Prof. Saket Navlakha at the Salk Institute.
\end{IEEEbiographynophoto}

\begin{IEEEbiographynophoto}{Saket Navlakha} is an assistant professor in the Integrative Biology Laboratory at the Salk Institute for Biological Studies. He received an A.A.\@ from Simon's Rock College in 2002, a B.S.\@ from Cornell University in 2005, and a Ph.D.\@ in computer science from the University of Maryland College Park in 2010. He was then a post-doc in the Machine Learning Department at Carnegie Mellon University until 2014. His research interests include designing algorithms to study the structure and function of biological networks, and the study of ``algorithms in nature''.

\end{IEEEbiographynophoto}






\twocolumn[
\begin{center}

\maketitle{\huge\sffamily Improving Similarity Search with High-dimensional Locality-sensitive Hashing\vspace{2ex}\\Supplementary Information}   
\end{center}
\begin{center}
\vspace{2ex}
\maketitle{Jaiyam Sharma, Saket Navlakha}
\end{center}
]

\section{Empirical analysis of Lemma 1}

To support our theoretical anlaysis of Lemma 1 (main text), we performed an empirical analysis showing that the AUPRC for \densefly is very similar to that of \simhash when using equal dimensions (Figure~\ref{fig:s1}). \densefly, however, takes $k$-times less computation. In other words, we proved that the computational complexity of \simhash could be reduced $k$-fold while still achieving the same performance.

\begin{figure*}[bt]
\begin{center}
\centerline{\includegraphics[width=\textwidth]{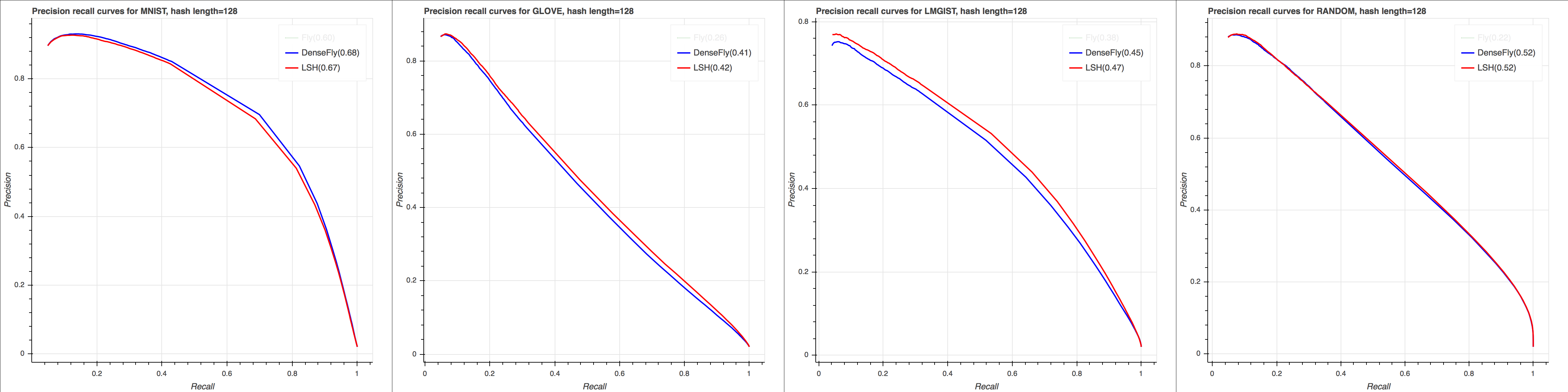}}
\caption{Empirical evaluation of \densefly with \simhash when using the same hash dimension. These empirical results support Lemma 1. Results are shown for $k=20$.}
\label{fig:s1}
\end{center}
\end{figure*}

\section{Comparing Pseudo-hashes with SimHash (Lemma 4)}


Lemma 4 proved that pseudo-hashes approximate SimHash with increasing WTA factor, $k$. Empirically, we found that the performance of only using pseudo-hashes (not using the high-dimensional hashes) for ranking nearest-neighbors performs similarly with \simhash for values of $k$ as low as $k=4$ (Figure~\ref{fig:pseudomap} and Figure~\ref{fig:pseudorecall}), confirming our theoretical results.

\begin{figure*}[htbp]
\begin{center}
\centerline{\includegraphics[width=\textwidth]{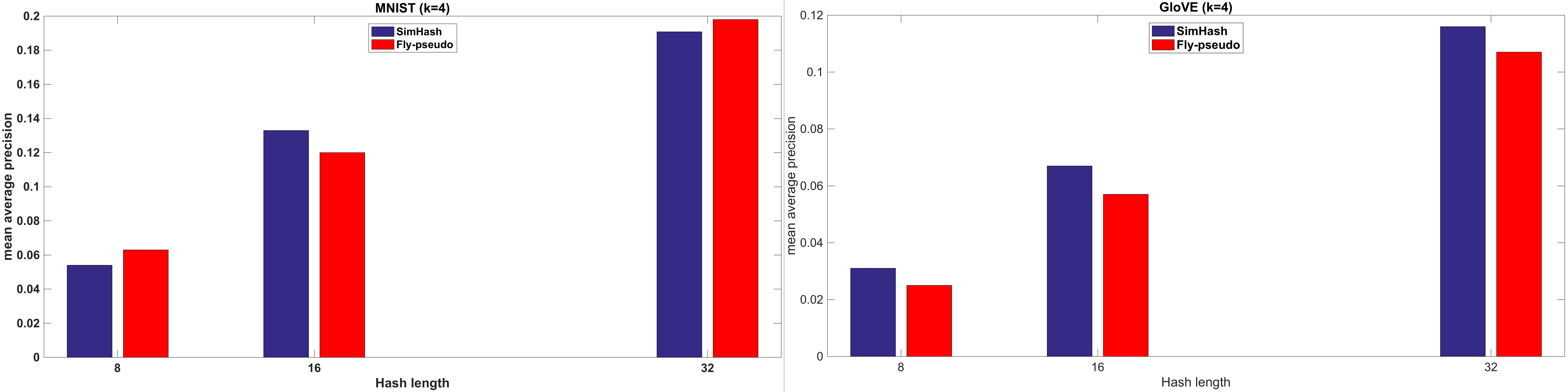}}
\caption{Mean average precision of pseudo-hashes for $k=4$ on the \mnist and \glove datasets. The mAP scores were calculated over 500 queries. \densefly pseudo-hashes and \simhash perform similarly.}
\label{fig:pseudomap}
\end{center}
\vskip -0.1in
\end{figure*}

\begin{figure*}[htbp]
\begin{center}
\centerline{\includegraphics[width=\textwidth]{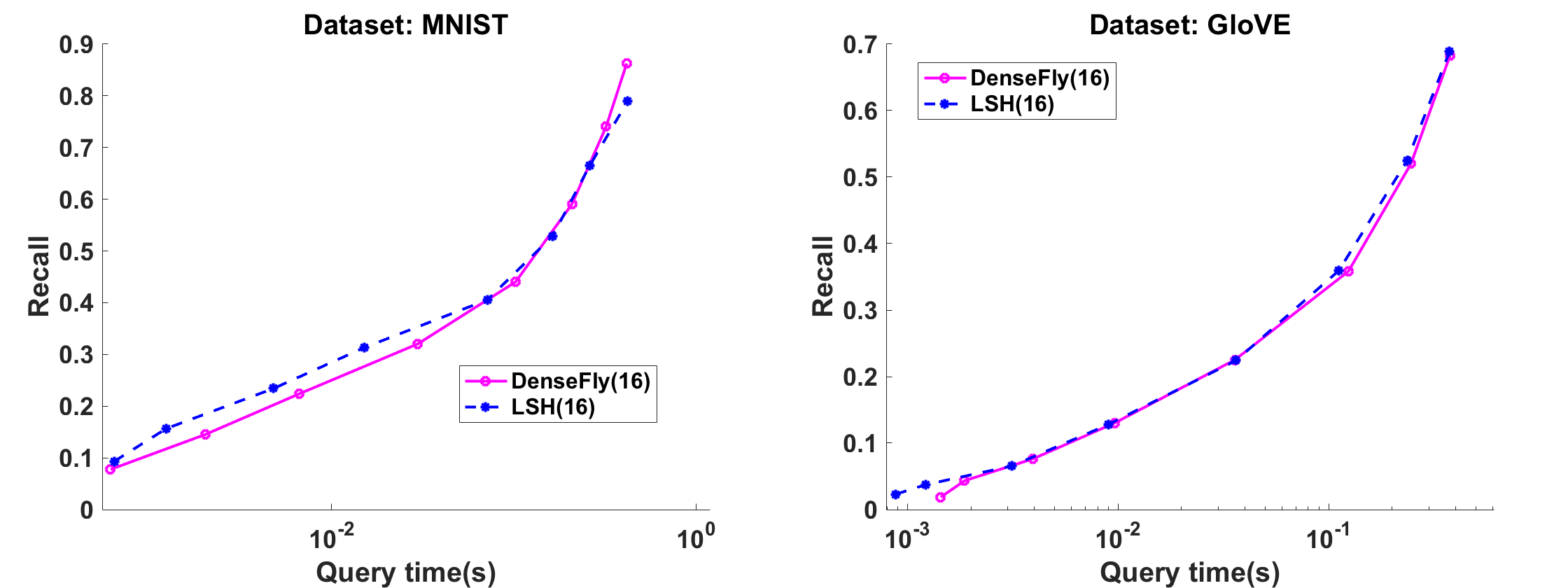}}
\caption{Recall versus query time for \mnist and \glove by \simhash (LSH) and \densefly. Ranking of candidates for \densefly is done by pseudo-hashes. The recall for pseudo-hashes is nearly the same as \simhash across all search radii (query times). This supports the theoretical argument that pseudo-hashes approximate \simhash.} 
\label{fig:pseudorecall}
\end{center}
\vskip -0.1in
\end{figure*}

\section{Bounds of $|a'_j-a_j|$}

Here we derive a result used in the proof of Lemma 2 (main text). Let $[a,b]$ denote the set of all integers from $a$ to $b$. 

Consider $|a_j'-a_j|$ as defined in the proof of Lemma 2. Since $a_j'=\sum_{i\in S_j}x'_i=a_j+\sum_{i\in S_j}\delta x_i$, then $|a_j'-a_j|=|\sum_{i\in S_j}\delta x_i|$. The problem of finding the maximum value of $|a'_j-a_j|$ is one of constrained optimization and can be solved, generally speaking, by using the Karush-Kuhn-Tucker conditions. In this case the solution can also be found using Lagrange multipliers as we show below.

Let $h(\{\delta x_i| i\in S_j\})\equiv|\sum_{i\in S_j} \delta x_i|$. The problem is to maximize $h(\{\delta x_i| i\in S_j\})$ such that $\sum_{t=1}^{mk} |\delta x_t|^p = \epsilon^p$. 

This translates to an equivalent condition $\sum_{i\in S_j} |\delta x_i|^p \leq \epsilon^p$. Since $|S_j|=\lfloor d\alpha \rfloor$, we reformulate $h$ without loss of generality as $h(\delta x_1,\delta x_2,\mathellipsis\ \delta x_{d\alpha})= |\sum_{i=1}^{i=d\alpha} \delta x_i|$, where we drop $\lfloor . \rfloor$ notation for simplicity. Also, we note that $|\sum_{i=1}^{i=d\alpha} \delta x_i| \leq \sum_{i=1}^{i=d\alpha} |\delta x_i|$, where the equality holds  \textit{if and only if} $\delta x_i\geq 0 \ \forall i\in [1,d\alpha]$. Thus, the absolute value signs can be dropped (if the solution found by dropping the absolute value is indeed $\geq 0$). 

Next, let
$f(\delta x_1,\mathellipsis,\delta x_{d\alpha})\equiv\sum_{i=1}^{i=d\alpha} \delta x_i$, subject to the constraint $g(\delta x_1,..\delta x_{d\alpha}) \leq 0$ where $g(\delta x_1,\mathellipsis\ \delta x_{d\alpha})=\sum_{i=1}^{i=d\alpha}\delta x_i^p-\epsilon^p$. We note that the global maximum of $f$ lies outside the $\epsilon$-ball defined by $g$. Thus, the constraint $g$ is active at the optimal solution so that $g(\delta x_1,\mathellipsis\ \delta x_{d\alpha})=0$. Thus, the optimal solution is calculated using the Lagrangian:
\begin{align}
\lagr(\delta x_1,..,\delta x_{d\alpha},\lambda) & = \sum_{i=1}^{i=d\alpha} \delta x_i-\lambda\ (\sum_{i=1}^{i=d\alpha}\delta x_i^p-\epsilon^p).\notag\\
\frac{\partial \lagr}{\partial \delta x_i} & = 1-p\lambda \delta x_i^{p-1} \  \forall i\in [1,d\alpha] \notag\\
\frac{\partial \lagr}{\partial\ \lambda} & = \sum_{i=1}^{i=d\alpha}\delta x_i^p-\epsilon^p \notag
\end{align}
Setting $\frac{\partial \lagr}{\partial \delta x_i}=0$, we get $\delta x_i=(\frac{1}{p\lambda})^{1/(p-1)} \equiv \gamma \forall\ i\in [1,d\alpha].$ Setting $\frac{\partial \lagr}{\partial\ \lambda}=0$, we get $d\alpha\gamma^p=\epsilon^p$. 

Thus, $\gamma=\epsilon/\sqrt[\leftroot{-3}\uproot{3}p]{d\alpha}$ is the only admissible solution for any $p$ since $\delta x_i\geq 0\ \forall\ i\in [1,d\alpha]$ and $\gamma >0$. Therefore, $f(\delta x_1,\mathellipsis,\delta x_{d\alpha})\leq d\alpha\epsilon /\sqrt[\leftroot{-3}\uproot{3}p]{d\alpha}$ and the proof follows.

\ifCLASSOPTIONcaptionsoff
  \newpage
\fi

\end{document}